\newif\ifmargincomments %A quick way of turning off margin comments for, say, arXiv submission
\newif\ifrelaxedv  % Toggle space-saving tricks
\newif\ifextendedv   % Toggle appendix
\newcommand{\mpmargin}[2]{{\color{green}#1}\marginpar{\color{green}\raggedright\footnotesize [MP]:#2}}
\newcommand{\frmargin}[2]{{\color{brown}#1}\marginpar{\color{brown}\raggedright\footnotesize [FR]:#2}}
\newcommand{\ksmargin}[2]{{\color{cyan}#1}\marginpar{\color{cyan}\raggedright\footnotesize [KS]:#2}}
\newcommand{\ksline}[2]{{\color{blue}#1}{\em \color{blue}[KS]: #2}}
\newcommand{\frline}[2]{{\color{blue}#1}{\em \color{blue}[FR]: #2}}
\newcommand{\todo}[1]{{\color{red}{\bf TODO:} #1}}
\newcommand{\mpmargin}[2]{#1}
\newcommand{\frmargin}[2]{#1}
\newcommand{\ksmargin}[2]{#1}
\newcommand{\ksline}[2]{#1}
\newcommand{\frline}[2]{#1}
\newcommand{\todo}[1]{}
\newcommand{\jvspace}[1]{}
\newcommand{\jvspace}[1]{\vspace{#1}}
\newcommand{\pushright}[1]{\ifmeasuring@#1\else\omit\hfill$\displaystyle#1$\fi\ignorespaces}
\newcommand{\pushleft}[1]{\ifmeasuring@#1\else\omit$\displaystyle#1$\hfill\fi\ignorespaces}
\newtheorem{theorem}{Theorem}%[section]
\newtheorem{lemma}{Lemma}%[theorem]
\newtheorem{corollary}{Corollary}%[theorem]
\newtheorem{claim}{Claim}
\let\NAT@parse\undefined
\newcommand{\cpp}{C\raise.08ex\hbox{\tt ++}\xspace}
  \def\H{\mathcal{H}}
\def\F{\mathcal{F}}  
\def\O{\mathcal{O}}  
\def\G{\mathcal{G}}  
\def\V{\mathcal{V}}  
 \def\R{\mathcal{R}} 
 \def\A{\mathcal{A}} 
 \def\K{\mathcal{K}} \def\E{\mathcal{E}}
 \def\dT{\mathbb{T}} 
\def\dG{\mathbb{G}}  \def\dV{\mathbb{V}}
 \def\dN{\mathbb{N}} \def\dE{\mathbb{E}}
\def\dR{\mathbb{R}}  
 \def\dP{\mathbb{P}}
\DeclareMathOperator*{\Ex}{\mathbb{E}}
\def\Ex{\mathbf{E}} % used for denoting expectation
\renewcommand{\leq}{\leqslant}
\renewcommand{\geq}{\geqslant}
\newcommand{\argmin}{\operatornamewithlimits{argmin}}
\def\alg{\textsc{FlowDec}\xspace}
\def\distalg{\textsc{DistributedPFSF}\xspace}
\def\hom{\textsc{DistributedHomogeneous}\xspace}
\def\private{\textsc{PrivateFirst}\xspace}
\def\shared{\textsc{SharedFirst}\xspace}
\def\distprivate{\textsc{DistributedPrivateFirst}\xspace}
\def\distshared{\textsc{DistributedSharedFirst}\xspace}
\newcommand{\isep}{\mathinner {\ldotp \ldotp}}
\newcommand{\fakeparagraph}[1]{\vspace{5pt}
\noindent\textbf{#1}}
\title{\LARGE \bf{Fast Near-Optimal Heterogeneous Task Allocation \\ via Flow Decomposition}}
\author{Kiril Solovey$^1$, Saptarshi Bandyopadhyay$^2$,  Federico Rossi$^2$, Michael T. Wolf$^3$, and Marco Pavone$^1$
\thanks{$^1$K. Solovey and M. Pavone are with the Department of Aeronautics \& Astronautics, Stanford University, Stanford, CA, 94305; {\tt \{kirilsol,pavone\}@stanford.edu}.}
\thanks{$^2$S. Bandyopadhyay and F. Rossi are with the Jet Propulsion Laboratory, California Institute of Technology, Pasadena, CA, 91109; {\tt \{saptarshi.bandyopadhyay, federico.rossi\}@jpl.nasa.gov}.}
\thanks{$^3$M. T. Wolf was with the Jet Propulsion Laboratory, California Institute of Technology, Pasadena, CA, 91109 for this work. He is now with Amazon Robotics AI. {\tt wolf@robotics.caltech.edu}.}}
\begin{document}

\maketitle

\begin{abstract}
Multi-robot systems are uniquely well-suited to performing
complex tasks such as patrolling and tracking, information
gathering, and pick-up and delivery problems, offering
significantly higher performance than single-robot systems.
A fundamental building block in most multi-robot systems is
task allocation: assigning robots to tasks (e.g.,
patrolling an area, or servicing a transportation request)
as they appear based on the robots' states to maximize
reward. In many practical situations, the allocation must
account for heterogeneous capabilities (e.g., availability
of appropriate sensors or actuators) to ensure the
feasibility of execution, and to promote a higher reward,
over a long time horizon. To this end, we present the
\alg algorithm for efficient heterogeneous
task-allocation, and show that it achieves an approximation factor of at
least $\bm{1/2}$ of the optimal reward. Our approach decomposes
the heterogeneous problem into several homogeneous
subproblems that can be solved efficiently using min-cost
flow. Through simulation experiments, we show that our
algorithm is faster by several orders of magnitude than a
MILP approach. 
\end{abstract}

\section{Introduction}\label{sec:introduction}
%Multi-robot systems (i.e., groups of coordinated robots cooperating towards a common goal) are uniquely well-suited to performing tasks such as , offering significantly higher performance compared to single-robot systems. 
A central problem in many multi-robot applications, including patrolling, information gathering, and pick-up and delivery problems,
is \emph{task allocation}: that is, to assign robots to outstanding, spatially-distributed tasks (e.g. patrolling an area or servicing a transportation request) based on the robots' states (e.g., position and power-level) and potentially heterogeneous capabilities (e.g., availability of appropriate sensors or actuators), and accounting not only for current tasks but also for the likelihood that future tasks will appear.

% a high-level overview of the problem.
In this paper we consider a task-allocation setting where mobile heterogeneous robots need be assigned to time-varying sets of tasks, or, equivalently, rewards. In our formulation, robots are divided into homogeneous fleets based on their ability to collect \emph{private} reward sets (where each fleet is associated with a unique private set). There is also a \emph{shared} reward set that can be collected by robots in any fleet. A robot is rewarded for a specific reward set if (i) it resides in the spatial vicinity of the reward, (ii) the reward is either shared or private to the robot's specific fleet, and (iii) no other robot is assigned to this reward.

A number of problems of interest fall in this setting, including object tracking, intruder following and imaging a scientific phenomenon. We are particularly motivated by data-gathering and agile science applications for planetary science where multiple spacecraft detect events of interest and then perform follow-up scientific observations. A number of concepts have been proposed in this setting, including multi-spacecraft constellations to study the Martian atmosphere and the dust cycle~\cite{lillis2020mars}, networks of balloons to detect seismic and volcanic events on Venus \cite{KrishnamoortyKomjathyEtAl2020,KrishnamoorthyLaiEtAl2019,DidionKomjathySutinEtAl2018}, and swarms of small spacecraft to study small bodies \cite{stacey2018autonomous}. In all of these applications, the heterogeneous task-allocation problem is central: a set of robots with heterogeneous sensing capabilities is tasked with observing a variety of scientific events of interest (e.g. dust storms, volcanism, or changes in a body's surface); a dynamic model that approximately predicts the spatio-temporal distribution and evolution of the phenomenon of interest is available; and, while certain tasks (e.g., radio science or medium-resolution imaging) can be performed by all agents, other specialized tasks (e.g. hyperspectral imaging, deployment of sondes, or sampling) can only be performed by a subset of the robots.
%\sapmargin{-}{Might be good to add some Earth based examples of interest to robotics, because one of my papers to got rejected since the motivation was for space applications only. They said the paper is not of interest to a general robotics researcher.}

%\paragraph{Tracking} A fleet of heterogeneous vehicles can be tasked with tracking and identifying intruders in a bounded region. Any vehicle can detect an intruder and follow it to track its location; specially-equipped vehicles can further identify and study intruders with high-resolution cameras and other sensing modalities.
%\paragraph{Pick-up and delivery problems} Fleets of vehicles with heterogeneous seating capacity and capabilities (e.g., accommodation for disabilities) service transportation request. Any vehicle can service a single-rider transportation request; request for large parties or for passengers with disabilities must be fulfilled by vehicles belonging to the appropriate fleet.}{Added. Rough, but we have little time}

\ifextendedv
\subsection{Related work}
\else
\fakeparagraph{Related work.}
\fi
%Predictive 
Task allocation is an enabling subroutine for applications like closely-coupled coordination (e.g., deciding which robot takes which place in a formation) and for loosely-coupled coordination (e.g., which robot traverses to a new spot to observe an interesting target). However, most variants of the problem are known to be computationally prohibitive to solve~\cite{GerkeyMataric04,KorashETAL13}. Thus, many approaches for task allocation %tend to 
scale poorly with the problem size (e.g., the number of robots) or provide no guarantees on the solution quality or runtime. Furthermore, heterogeneous robot capabilities (e.g., ground vehicles and aerial drones jointly working to achieve a mutual goal), impose additional challenges for designing practical, high-quality solution approaches~\cite{BaiETAL20,AgatzETAL18,FerrandezETAL16,MurrayChu15,Wang2017}. 

Several algorithm types have been proposed specifically for 
%predictive 
task allocation by the robotics community (see e.g. the survey in~\cite{RossiBandyopadhyayEtAl2018}). In {auction-based algorithms} \cite{GerkeyMataric04,Ayanian17,KoenigEA10}, robots bid on tasks based on their state and capabilities. Auction-based algorithms can be readily implemented in a distributed fashion and naturally accommodate heterogeneous robots. Spatial partitioning algorithms \cite{PavoneFrazzoliEtAl2011} rely on partitioning the workspace into regions and assigning each region to one or multiple robots. Tasks within a region are assigned to the robot (or robots) responsible for that region. Spatial partitioning algorithms capture the likelihood of occurrence of future events. %; however, they are \frline{not well-suited}{Weak wording} to heterogeneous robotic fleets. 
Team-forming and temporal partitioning algorithms \cite{Smith09} group heterogeneous robots in teams so that each team is capable of performing all the tasks that might arise. %However, such algorithms generally require all robots to explicitly coordinate, foregoing a distributed implementation.
Mixed-integer linear programming (MILP) approaches~\cite{Bellingham03} explicitly represent the task-allocation problem as a mixed-integer program, and can readily capture a variety of constraints including heterogeneity. %; however, they are generally not amenable to a distributed implementation, and they do not model the likelihood of future tasks occurring.
Markov chain-based algorithms \cite{Bandyopadhyay17} model the robots' motion via a stochastic policy prescribed by a Markov chain optimized according to a given cost function. %The policies prescribed by such algorithms are amenable to a distributed implementation; however, formulations in this class are generally not amenable to heterogeneous robotic systems. 
While all those approaches cover a wide range of problems and techniques, they generally either do not scale well with the problem size or provide weak theoretical guarantees on the solution quality or runtime.

\ifextendedv
% this should only appear in the extended version
The predictive nature of our problem strongly relates to the notion of online algorithms~\cite{FiatWoeginger98,BorodinElYaniv05,HentenryckBent09}, which are designed for problems in which the input is  revealed gradually, while optimizing a goal function (e.g., minimizing cost). The typical benchmark for online algorithms is the {worst-case} ratio between the solution of the online algorithm and the optimal solution for the {offline} case in which the entire input is given a priori. This is termed as the algorithm's {competitive ratio}. The {$k$-server problem}~\cite{Koutsoupias09,BertsimasETAL19}, which has been studied in this online context, bears some resemblance to our setting. In particular, it can be viewed as the centralized case consisting of a single fleet of $k>1$ homogenenous agents, where the goal is to collect all the rewards while minimizing travel cost in an online fashion. A recent paper introduced an online randomized algorithm for the problem which achieves a competitive ratio of $O(\log^6 k)$~\cite{Lee18}. The weighted variant of the problem, which is closer to our heterogeneous setting, requires an even larger competitive ratio of $\Omega(2^{2^{k-4}})$~\cite{BansalETAL17}. 
\fi

Applications of homogeneous task allocation have been extensively explored recently within the setting of transportation and logistics. E.g., the operation of an autonomous mobility-on-demand system requires to assign ground vehicles to routes in order to fulfill passenger demand, while potentially accounting for road congestion~\cite{SoloveyETAL19,WallerETAL18,Levine17}. Recent work develops efficient package delivery framework consisting of multiple drones in which drones are assigned to packages and delivery routes~\cite{ChoudhurySoloveyETAL2020}, and proposes utilizing public-transit vehicles on which drones can hitchhike in order to conserve their limited energy,  thus noticeably increasing their service range. From a broader perspective, task allocation can  be viewed through the lens of the vehicle routing problem (VRP)~\cite{TothVigo2014} or the orienteering problem (OP)~\cite{GunawanETAL16}. % However, both VRP and OP rarely consider communication constraints. 
However, save a few special cases, VRP and OP  are typically approached with MILP formulations that scale poorly, or heuristics that do not provide optimality guarantees.

The problem that we address in this paper can be represented as a multi-agent pathfinding (MAPF) problem~\cite{YuLaValle16}. The goal in MAPF is to compute a  collection of paths for the agents to minimize execution time, while accounting for inter-agent conflict constraints. %(To cast our problem into a MAPF instance, we can impose constraints that guarantee that each reward will be collected by at most one agent in a given time step.) 
Unfortunately, %solving MAPF optimally is NP-hard~\cite{yu2013structure}, and 
no polynomial-time approximation algorithms that can solve general MAPF instances exist~\cite{yu2013structure}, to the best of our knowledge. 
A recent approach for MAPF termed conflict-based search~\cite{felner2017search,ma2017lifelong,honig2018conflict,liu2019task} earned popularity due to its efficiency in moderately-sized instances. However, it does not provide run-time guarantees and does not scale well in settings that require considerable amount of coordination among agents. 

\ifextendedv
\subsection{Contribution}
\else
\fakeparagraph{Contribution.}
\fi
%In general, to the best of our knowledge, no algorithm is available to solve the {predictive} task allocation problem for time-varying reward collection for teams of {heterogeneous} efficiently.
%In this context, the key contribution of this paper is to
We present an efficient approximation algorithm, termed \alg, for heterogeneous task-allocation in the context of maximizing the collection of time-varying rewards.
From the theoretical perspective we prove that our algorithm achieves an approximation factor of at least $1/2$ of the optimal reward in polynomial time. The algorithm also exhibits good performance in practice. Specifically, in simulation experiments, we demonstrate that the algorithm achieves a speedup of several orders of magnitude over a MILP approach, and its runtime scales modestly with the problem size. Moreover, we prove that the runtime is insensitive to the number of agents present in each fleet. 

From an algorithmic standpoint, the \alg algorithm decomposes the heterogeneous problem into several homogeneous subproblems,  that can be solved efficiently using min-cost flow~\cite{Williamson2019}, without significant degradation in solution quality. Our approach shares some similarity with a recent work that also considers homogeneous decomposition~\cite{PrasadETAL20}, albeit  the problem setting (computing multiple travelling-salesman routes in an undirected and time-invariant graph), as well as the algorithmic techniques and analysis that are developed there, are quite different from ours.

\ifextendedv
\subsection{Organization}
\else
\fakeparagraph{Organization.}
\fi 
The organization of this paper is as follows. In Section~\ref{sec:problem_statement} we provide basic definitions and the problem formulation. In Section~\ref{sec:homogeneous} we first study the homogeneous subproblem. %and demonstrate that it can be solved to optimality using min-cost flow. 
Then, in Section~\ref{sec:centralized} we describe the \alg algorithm and provide its theoretical analysis.
%We then describe a distributed version of it in Section~\ref{sec:distributed_implementation}. 
In Section~\ref{sec:experiments} we provide experimental results demonstrating the good performance and scalability of our approach. We conclude with a discussion of future research directions in Section~\ref{sec:future}.

\begin{figure}[t]
    \centering
    \includegraphics[width=1\columnwidth,trim={4cm 1.3cm 1cm 3cm},clip]{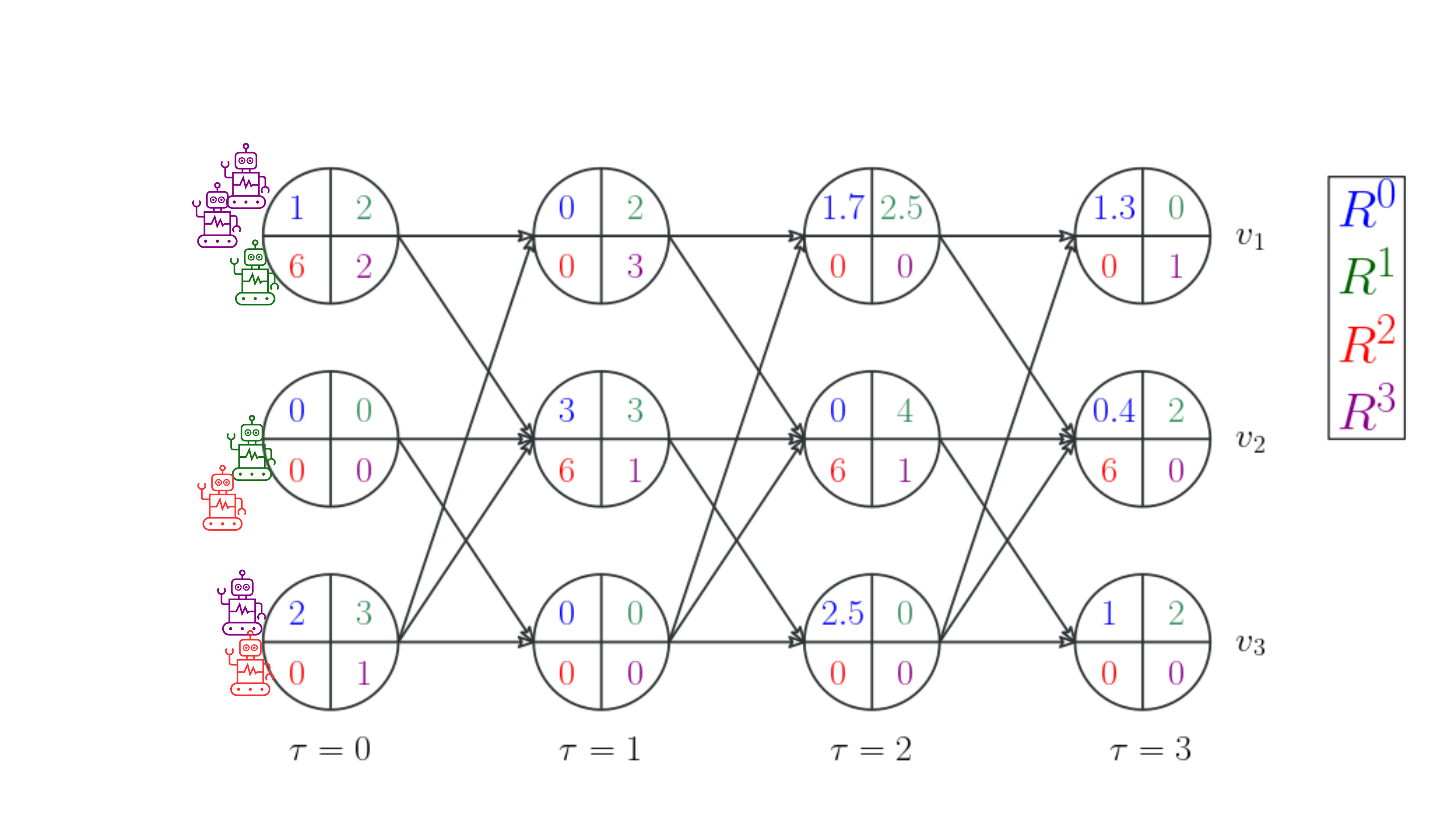}
    %\vspace{5pt}
    \caption{Illustration of the problem setting. In this example, the workspace $\G$, which consists of three vertices $v_1, v_2, v_3$ and seven edges (going from, e.g., the vertices at time $\tau=0$ to $\tau=1$), is expanded over the time horizon $T=3$. There are $F=3$ fleets of heterogeneous agents, where $a_1=2$ (green), $a_2=2$ (red), $a_3=3$ (magenta). For every vertex $v\in \V$ and time step $\tau$, the values in the corresponding time-expanded vertex represent the following rewards: shared reward $R^0_\tau[v]$ (blue), and private reward $R^1_\tau[v], R^2_\tau[v], R^3_\tau[v]$ for fleets $1$ (green), $2$ (red), and $3$ (magenta), respectively. The initial positions are $p_0^1=(1,1,0),p_0^2=(0,1,1), p_0^3=(2,0,1)$. \label{fig:workspace}}
    \vspace{-15pt}
\end{figure}

\section{Preliminaries and problem formulation \label{sec:problem_statement}}

We first describe the problem ingredients and then proceed to a formal definition of the problem.
%
%\subsection{Notations}
%Let $\dN$, $\dR$, and $\dR_{\geq 0}$ represent the set of natural numbers (positive integers), real numbers, and non-negative real numbers. 
%Let $\dR_{\leq 0}^{n_1 \times n_2}$ represent a matrix of size $n_1 \times n_2$ composed of non-negative real numbers, where $n_1, n_2 \in \dN$.
%Let $\dP(\cdot)$ represent the probability of an event.
%Let $|\cdot|$ represent the cardinality of a set.
%
%\subsection{Agents and Fleets \label{subsec:Agents}}
The robots' workspace is represented by a directed graph $\G=(\V,\E)$, with vertices $i\in \V$ denoting \frmargin{physical locations}{Really, vertices are for tasks, physical locations are a special case. Do we want to explain or are we happy with the current formulation?}\ksmargin{-}{I'm fine with the current text, though I agree that it can be made more precise.} for the robots, and edges $(i,j)\in \E$ denoting transitions between locations (see example in Figure~\ref{fig:workspace}). We use $\E^+_i$ to denote the set of outgoing neighbors of a vertex $i\in \V$, namely $\{j\in \V|(i,j)\in \E\}$. We similarly define $\E^-_i$ to represent the set of incoming neighbors, i.e., $\{j\in \V|(j,i)\in \E\}$.

We consider a discrete-time, finite-horizon framework, where the horizon is specified by a positive integer $T$. We use $\tau\in [0\isep T]$ to denote a given time step, %(without loss of generality, we assume that the current time step is $0$) 
where  $[r\isep r']$ denotes an integer interval $\{k\in \dN:r\leq k \leq r'\}$ between two integers such that $r<r'$. 

% For the simplicity of notation, given $G$ and $T$ we define the time-expanded directed graph $\G=(\V,\E)$. The vertex set includes $T+1$ timestamped copies of each vertex of $V$, i.e., 
% $$\V:=\{i_\tau|i\in V, \tau\in [0\isep T]\}.$$
% The edges set includes $T$ timestamped copies of every edge of $E$, i.e., 
% $$\E:=\{(i_{\tau},j_{\tau+1})|(i,j)\in E, \tau\in [0\isep T-1]\}.$$
% This implies that each edge of $G$ takes one time step to traverse. 

Throughout this paper, we refer to the robots as ``robots" or ``agents" interchangeably. The set of all agents is denoted by $\A$, which is subdivided into $F$ disjoint fleets $\A^1,\ldots, \A^F$, where agents within the same fleet are assumed to have homogeneous capabilities.
We denote by $a_f\!=\!|\A^f|$ the number of agents in a fleet $f\!\in\! \F$, where  $\F:=[1\isep F]$ is the set of fleet indices. 
The agents are mobile and transition from one vertex $i\!\in\! \V$ to another $j\!\in\! \V$ every time step, assuming that $(i,j)\!\in\! \E$. %where agents can stay put in vertex $i$ if $(i,i)\!\in\! \E$.
For a fleet $f\!\in\! \F$, the positions of its agents at time $\tau\!\in\! [0\isep T]$ are specified by the vector $p^f_\tau\!\in\! [0\isep a_f]^{ |\V|}$, the value $p^f_\tau[i]$ specifies the number of agents of $\A^f$ located at a vertex $i\!\in\! \V$. Figure~\ref{fig:workspace} provides a graphical representation of the problem formulation. %Agents can move Given that agent $q\in \A$ is located in $i\in \V$ at time step $\tau\in [0\isep T-1]$ (i.e., $p^q_\tau[i]=1$), it moves to an adjacent vertex $j\in \E_+(i)$ in the next time step (i.e., $p^q_{\tau+1}[j]==1$), 

\subsection{Shared and private rewards}
The problem consists of allocating agents to rewards along the time-expanded vertices of $\G$. For now, we assume that the distributions of rewards are known in advance. We discuss a predictive extension where the reward sets are not known in advance in Section~\ref{sec:predictive}.

There are $F+1$ types of reward sets $R^0,R^1,\ldots,R^F$, which determine the values gained by the agents for visiting any given vertex $i\in \V$ at time $\tau\in [0\isep T]$, and there are constraints that specify which fleets $f\in \F$ can collect a specific reward $R^t$ of type  $t\in [0\isep F]$. Specifically, rewards of type $t=0$ are considered to be \emph{shared}, i.e., can be collected by any robot of any fleet $f\in \F$. In particular, for a given vertex $i\in \V$ and time step $\tau\in [0\isep T]$, the system gains the reward $R^0_\tau[i_\tau]$ if $\sum_{f\in \F}p^f_\tau[i]\geq 1$, i.e.,  at least one agent (from any fleet) visits the vertex $i$ at time $\tau$.
In contrast, rewards of type $t \neq 0$ are considered to be \emph{private}, and can only be gained via  agents from the particular fleet $\A^f$ such that $f=t$. I.e., when $t\neq 0$ then for any $i\in \V,\tau\in [0\isep T]$, the system gains the reward $R^t_\tau[i]$ if $p^t_\tau[i]\geq 1$, i.e.,  at least one agent from $\A^t$ visits the vertex $i$ at time $\tau$. 

\begin{table}[t]
    \noindent\fbox{
    \begin{minipage}[t]{0.48\textwidth - 2\fboxsep - 2\fboxrule}%
            Given the graph $\G$, time horizon $T$, agent fleets $\A^1,\ldots,\A^F$, initial positions $p_0$, and reward sets $R^0,\ldots, R^F$, the objective is to \emph{maximize}
      \begin{subequations}
      \begin{align}
          \R(x,y,z):= \sum_{j\in \V}\! \sum_{\tau\in [0\isep T]} \! \sum_{f\in \F}\left(R^0_\tau[j]\cdot y^f_\tau[j] +  R^f_\tau[j]\cdot z^f_\tau[j] \right) \label{eq:objective_het}
      \end{align}
        \begin{align}
          \underset{\begin{array}{ll}
x_{\tau}^{f}[i,j]\in [0\isep a_f], & \forall (i,j) \in \E, f\in \F,  \tau \in [0\isep T-1], \\
y_\tau^f[j]\in \{0,1\}, z_\tau^f[j]\in \{0,1\}, & \forall j\in \V,   f\in \F, \tau \in [0\isep T],\\
%z_\tau^f[i]\in \{0,1\} & \forall i\in \V,   f\in \F,  \tau \in [0\isep T], 
\end{array}}{\textrm{with the decision variables}  \quad \quad \quad \quad \quad \quad \quad \quad \quad \quad \quad \quad \quad \quad } \nonumber
        \end{align}
        \begin{align}
          \textrm{subject to} \nonumber \\
          %x^q_\tau[i,j] & \in \{0,1\} \thinspace, & \forall (i,j) \in \E, & & \forall q \in \A, & & \forall \tau\in [0\isep T-1], \label{eq:init_x} \\
          \sum_{i\in \E^+_j}\!\! x^f_0[j,i] & = p^f_0[j] \thinspace, & & \!\! \!\!  \forall j \in \V,  f \in \F, \label{eq:init_position} \\
          \sum_{i\in \E^-_j}\!\!\! x^f_{\tau-1}[i,j] & = \!\!\sum_{\ell\in \E^+_j}\!\!\! x^f_\tau[j,\ell], & & \!\! \!\! \forall j \in \V,  f \in \F, \tau\!\in\! [1\isep T\!-\!2], \label{eq:flow_conservation} \\
          %y^f_\tau[i] & \in \{0,1\} \thinspace, & & \forall i \in \V, & &  \forall q \in \A, && \forall \tau \in [0\isep T]\\
           \sum_{i\in \E^+_j}\!\! x^f_\tau[j,i] & \geq y^f_\tau[j], & & \!\! \!\!  \forall j \in \V, f\in\F, \tau\!\in\! [0\!\isep\! T-1], \label{eq:bound_y_1} \\ 
          \sum_{i\in \E^-_j}\!\! x^f_{T-1}[i,j] & \geq  y^f_T[j] , & & \!\! \!\! \forall j \in \V,  f\in\F,  \label{eq:bound_y_2} \\ 
          \sum_{f\in\F}\!\! y^f_\tau[j] & \leq 1 , & & \!\! \!\!  \forall j\in \V,  \forall\tau\in [0\isep T],\label{eq:bound_y_3} \\ 
          %z^f_\tau[i] & \in \{0,1\} \thinspace,  i \in \V, & &  \forall q \in \A, && \forall \tau\in [0\isep T], \label{eq:init_z}\\
          \sum_{i\in \E^+_j}\!\! x^f_\tau[j,i] & \geq z^f_\tau[j]  , & & \!\! \!\! \forall j \in \V,   f\!\in\!\F,  \tau\!\in\! [0\!\isep\! T-1], \label{eq:bound_z_1} \\ 
           \sum_{i\in \E^-_j}\!\! x^f_{T-1}[i,j]  & \geq z^f_T[j] , & & \!\! \!\! \forall  j \in \V, f\in\F,  \label{eq:bound_z_2} \\ 
          z^f_\tau[j] & \leq 1 , & &  \!\! \!\!  \forall j\in \V,  f\in \F ,\tau\in [0\isep T]. \label{eq:bound_z_3}
        \end{align}
      \end{subequations}
    \end{minipage}}
 %   \vspace{5pt}
\caption{Definition of the heterogeneous task-allocation problem. \label{tbl:heterogeneous}}
\vspace{-25pt}
\end{table}

\subsection{Problem formulation}
We provide a formal definition of our problem in the form of an integer program (Table~\ref{tbl:heterogeneous}). The input to the problem consists of the workspace graph $\mathcal{G}$, time horizon $T\in \dN_{>0}$, agent fleets $\A^1,\ldots,\A^F$ with $F\in \dN_{>0}$ with known initial positions $p^f_0[i]$ for all $i\in V$, and rewards sets $R^0,\ldots, R^F$. The goal of this work is to obtain a task-allocation scheme, which maximizes the total collected reward. The task-allocation scheme consists of (i) specifying the locations of all agents for every time step $\tau \in [0\isep T]$ and (ii) assigning agents to rewards.

The solution is described through two types of decision variables. The integer variable $x^f_\tau[i,j]\in [0\isep a_f]$ denotes a transition of agents in fleet $f\in\F$ from vertex $i\in \V$ to $j\in \V$, at time $\tau\in [0\isep T-1]$, assuming that $(i,j)\in \E$.  
The decision variables $y^f_\tau[i]\in \{0,1\}$ and $z^f_\tau[i]\in \{0,1\}$ indicate whether an agent from fleet $f\in \F$ is assigned to collect the shared reward $R^0_\tau[i]$, or private reward  $R^f_\tau[i]$, respectively. 

The objective function is given in Eq.~\eqref{eq:objective_het}. Eq.~\eqref{eq:init_position} ensures that agents will start at their initial positions as specified by $p_0$. Eq.~\eqref{eq:flow_conservation} ensures the continuity of the agents. 
Eq.~\eqref{eq:bound_y_1}, \eqref{eq:bound_y_2}, ensure that an agent from fleet $f$ is assigned to a shared reward  $R^0_\tau[j]$ only if one of the agents of this fleet is at vertex $j$ in time $\tau$; similarly, Eq.~\eqref{eq:bound_z_1} and \eqref{eq:bound_z_2} enforce this condition with respect to private reward sets. Eq.~\eqref{eq:bound_y_3}, \eqref{eq:bound_z_3} limit the number of agents assigned to every reward type in a given vertex to~$1$. 

We mention that our problem formulation and solution approach can be easily extended to accommodate edge traversal costs by modifying the objective functions in Tables~\ref{tbl:heterogeneous} and~\ref{tbl:homogeneous}. For simplicity of presentation we do not account for edge costs in the current formulation.
%\todo{Mention that we wish to solve this problem in a distributed manner.}

\subsection{Predictive setting}\label{sec:predictive}
The above formulation can be extended to the predictive setting, where we are given the values of the reward sets for the first time step $R_0^0,\ldots,R_0^f$, and a stochastic model of the evolution of rewards with respect to time. To exploit the aforementioned deterministic formulation, it is straightforward to show that by plugging into the problem defined in Table~\ref{tbl:heterogeneous} the expected values of the stochastic rewards, the solution would maximize the expected gained reward.

This gives rise to a receding-horizon implementation: given the current state of the system (e.g., locations of agents and values of current rewards) and the current time step $\tau$ we predict the reward values $R^0,\ldots,R^F$ for $T$ time steps into the future, and  compute a corresponding solution $x^\tau,y^\tau,z^\tau$. We then execute this solution for the first time step, obtain current values of the reward, and repeat this process in the next time step $\tau+1$. 
For the simplicity of presentation, we shall focus on the static setting of the problem from now on. We do note that our experiments are for the predictive case. 

% We proceed to provide some concrete definitions and an example.
% \ksline{I'm not sure how to write formally the definition for the evolving tasks. Can anyone assist?}{}
% \frmargin{
% For example, when tracking a natural phenomenon (e.g. the evolution of dust devils, the dynamics of volcanic eruption plumes, or the movement of animals), high-fidelity models are available that predict the stochastic evolution of the model as a function of current observations.}{Added}

%We proceed to provide some concrete definitions and an example.
%\ksline{I'm not sure how to write formally the definition for the evolving tasks. Can anyone assist?}

%For example, suppose we have two tasks which represent tracking of two distinct objects (one in each task), where the reward for successful tracking are $1, 2$, respectively. Also assume that the objects are confined to a $3\times 3$ grid graph, and that in each time step, each object selects one of its neighbors uniformly at random (and cannot stay in the same vertex). Figure \ref{fig:distribution_and_reward} shows the expected reward $R^t$ for the two task classes (depicted in blue and red respectively) across three time steps.

%begin{figure}[h]
%\centering
%\includegraphics[width=.35\textwidth]{}
%\caption{Expected reward for two task classes, demonstrating The tasks require tracking of an object moving to adjacent cells according to a random walk.}
%\label{fig:distribution_and_reward}
%\end{figure}

\subsection{Discussion}
A few comments are in order.
First, the discrete nature of the graph representation is well-suited to the discrete, combinatorial nature of the task allocation problem, where agents must be assigned to \emph{discrete} tasks. The formulation is highly flexible: for example, vertices can represent tasks to be performed, events to be observed, or regions to be surveyed, with edges representing the travel time between these events. In this context, both the formulation in Table~\ref{tbl:heterogeneous} and our \alg algorithm can be straightforwardly extended to a setting where the goal is to maximize the reward minus the travel cost, where the latter is represented by costs assigned to the edges of $\G$.

The approach is also amenable to settings where events and tasks appear in a continuous workspace, via appropriate discretization. However, in such cases a careful and problem-specific selection of the appropriate degree of discretization is critical to ensure good performance.

Our approach does not account for collision avoidance between the agents.
In typical autonomy frameworks (e.g.,~\cite{VolpeNesnasea01, Wolf17}), team-level decisions such as task allocation are handled at a different, higher level compared to collision avoidance, which is often resolved through local controllers (e.g.,~\cite{ma2017lifelong,honig2018conflict,liu2019task,ChoudhurySoloveyETAL2020}). %---an approach that we adopt in this paper. %Kiril: removed the last part of the sentence as it may be seen that account for collisions through a bi-level approach.
Unless the workspace is heavily cluttered with obstacles, such a hierarchical architecture typically results in good performance, while offering greatly-reduced complexity compared to a hypothetical  integrated architecture.

\section{Algorithm for the homogeneous case}\label{sec:homogeneous}
In preparation to the \alg algorithm for the heterogeneous problem, we describe  a key ingredient, which is an efficient solution to the homogeneous problem. The homogeneous problem consists of maximizing the collected reward for a single fleet $f\in \F$ of homogeneous agents and a private reward set $R^f$. Our main insight is that an optimal solution to the homogeneous problem can be found efficiently by solving a min-cost flow (MCF) problem~\cite{Williamson2019}.

\begin{table}[!ht]
\noindent\fbox{\begin{minipage}[0\isep T]{0.48\textwidth - 2\fboxsep - 2\fboxrule}%
Given the graph $\G$, time horizon $T$, agent fleet $A^f$, initial positions $p^f_0$, and private reward set $R^f$,
      \begin{subequations}
        \begin{align*}
          \textrm{maximize} \quad  \R^f(x,z):= \sum_{j\in \V} \sum_{\tau\in [0\isep T]}  R^f_\tau[j]\cdot z^f_\tau[j], %\label{eq:objective_hom}
        \end{align*}
\begin{align}
 \textrm{subject to  (\ref{eq:init_position}), (\ref{eq:flow_conservation}), (\ref{eq:bound_z_1}), (\ref{eq:bound_z_2}), (\ref{eq:bound_z_3}), } \text{ with respect to } \A^f. \nonumber 
\end{align}
\end{subequations}
\end{minipage}}
%\vspace{5pt}
\caption{Definition of the heterogeneous task-allocation problem.\label{tbl:homogeneous}}
\vspace{-15pt}
\end{table}

Denote by $\H(R^f, p_0^f)$ the homogenenous optimization problem described in Table~\ref{tbl:homogeneous}, for a private reward set  $R^f$ and initial positions $p_0^f$ of a fleet $f\in \F$. %, and the pair $(x^f,z^f)$ is the optimal solution of this homogeneous problem.
Note that the problem of assigning the set of shared rewards $R^0$ to all the agents $\A$, while ignoring the assignment of private rewards, can be viewed as the homogeneous problem $\H(R^0,p)$. The following lemma states that an optimal solution for the homogeneous problem can be obtained in %(low-degree)
polynomial time.

\begin{lemma}[Efficient solution of $\H$]\label{lem:homogeneous}
  For a given fleet $f\in \F$, private reward set $R^f$, and initial positions $p_0^f$, the optimal solution for the homogeneous problem $\H(R^f, p^f_0)$  can be computed in $\O\left(T^2mn\log (Tn)+T^2n^2\log (Tn)\right)$
  time, where $m=|\E|,n=|\V|$. This bound also holds for the homogeneous problem  $\H(R^0,p_0)$ with the shared reward $R^0$.
\end{lemma}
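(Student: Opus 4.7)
The plan is to reduce $\H(R^f,p_0^f)$ to a min-cost flow (MCF) problem on a time-expanded graph, so the lemma follows by invoking a standard polynomial-time MCF algorithm. The reduction proceeds by constructing $\G'=(\V',\E')$ as follows: for every $i\in \V$ and $\tau\in[0\isep T]$, split $i$ into two nodes $i^{\text{in}}_\tau, i^{\text{out}}_\tau$, connected by two parallel arcs --- a capacity-$1$ ``reward'' arc of cost $-R^f_\tau[i]$ and a capacity-$a_f$ ``bypass'' arc of cost $0$. For every $(i,j)\in\E$ and $\tau\in[0\isep T-1]$, add a transition arc $i^{\text{out}}_\tau\to j^{\text{in}}_{\tau+1}$ of capacity $a_f$ and cost $0$. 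Finally, attach a super-source $s$ with an arc to each $i^{\text{in}}_0$ of capacity $p_0^f[i]$, and a super-sink $t$ with an arc from each $i^{\text{out}}_T$ of capacity $a_f$, all at cost $0$, and require $s$-to-$t$ flow of value $a_f=\sum_i p_0^f[i]$.

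Next I would establish a value-preserving correspondence between integer feasible flows in $\G'$ and feasible $(x,z)$ of $\H(R^f,p_0^f)$. Given an integer $s$-$t$ flow, set $x^f_\tau[i,j]$ equal to the flow on the transition arc $i^{\text{out}}_\tau\to j^{\text{in}}_{\tau+1}$ and $z^f_\tau[j]$ equal to the flow on the reward arc at $j_\tau$. Flow conservation at the in/out nodes and at $s$ directly yields \eqref{eq:init_position}-\eqref{eq:flow_conservation}. The capacity-$1$ cap on the reward arc enforces $z^f_\tau[j]\in\{0,1\}$, i.e. \eqref{eq:bound_z_3}, and since the reward arc has cost $-R^f_\tau[j]\le 0$, any minimum-cost flow routes through it whenever at least one unit of flow arrives at $j_\tau$, giving \eqref{eq:bound_z_1}-\eqref{eq:bound_z_2}. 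The converse mapping from $(x,z)$ to a flow is immediate. Consequently a min-cost integer flow corresponds to a maximum-reward solution, and integer optimality is guaranteed because all capacities and the source supply are integral (total-unimodularity of the MCF node-arc incidence matrix).

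For the running time, the time-expanded graph has $|\V'|=\Theta(Tn)$ nodes and $|\E'|=\Theta(T(m+n))$ arcs. Applying a standard MCF algorithm such as the successive-shortest-path method with reduced costs and Dijkstra (see~\cite{Williamson2019}) requires $\O(Tn)$ shortest-path computations, each of complexity $\O(T(m+n)\log(Tn))$, for an overall cost of $\O(T^2n(m+n)\log(Tn))=\O(T^2mn\log(Tn)+T^2n^2\log(Tn))$ as claimed. The shared-reward case $\H(R^0,p_0)$ is handled identically, treating $\A$ as a single fleet of size $\sum_f a_f$ with reward set $R^0$ and combined initial positions $p_0=\sum_f p_0^f$.

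The main obstacle is justifying that the unconstrained MCF reduction faithfully encodes the indicator constraints \eqref{eq:bound_z_1}-\eqref{eq:bound_z_2}: one must argue that, at the MCF optimum, the reward arc is selected over the bypass arc precisely when some agent visits the node, so that no auxiliary constraints coupling the $x$ and $z$ variables need to be added. This follows from the strictly non-positive cost of the reward arc paired with the parallel-arc construction, but getting this link ``for free'' is the one delicate step; the remainder of the argument is a routine application of standard MCF machinery.
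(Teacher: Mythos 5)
Your reduction is essentially the paper's: the same time-expanded graph in which each vertex--time pair is split into two copies joined by a zero-cost bypass arc in parallel with a capacity-$1$ arc of cost $-R^f_\tau[i]$, source/sink arcs encoding $p_0^f$, transition arcs encoding $\E$, and integrality obtained from total unimodularity of the flow polytope. One small misattribution: the constraints \eqref{eq:bound_z_1}--\eqref{eq:bound_z_2} are enforced \emph{structurally} (the reward arc carries at most the total flow passing through the split node, so $z^f_\tau[j]$ cannot exceed the number of agents present), not by the sign of the cost; the non-positive cost only ensures that an \emph{optimal} flow collects every collectible reward. So the ``delicate step'' you flag at the end is not actually needed for the equivalence of the two problems.

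The one genuine gap is the runtime. You invoke successive shortest paths and assert $\O(Tn)$ shortest-path computations, but the standard iteration bound for that method is proportional to the total supply, here $a_f$, which is not bounded by $Tn$ and would make the runtime depend on the fleet size --- precisely the dependence the lemma (and the paper's scalability claims) avoid. The paper instead applies Orlin's strongly polynomial algorithm, whose bound $\O\left((M'+N)\log N\,(M+N\log N)\right)$ depends on the number $M'=\O(Tn)$ of finite-capacity arcs (the reward and source arcs only) rather than on capacity magnitudes, with $N=\O(Tn)$ and $M=\O(Tm)$; substituting gives the stated complexity. To salvage your route you would need either to switch to such an algorithm, or to prove a capacity-independent bound on the number of augmentations --- e.g., that each negative-cost augmentation permanently saturates a reward arc, which is not immediate since later augmentations may cancel flow on previously saturated arcs.
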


\ifextendedv
%full proof should go to the extended version
\begin{proof}
We show that the homogeneous problem can be transformed into a min-cost flow (MCF) problem~\cite{Williamson2019}, for which an optimal solution can be efficiently found. In the remainder of this proof we define the ingredients for MCF, formally define the problem, describe its relation to the homogeneous problem, and finally discuss its complexity.

MCF is defined over a directed graph $\dG=(\dV,\dE)$, which has to designated vertices $s,g\in \dV$, that represent the source and sink, respectively. In addition, for every edge $e\in\dE$ we have the attributes $u_e>0$ and $c_e\in \dR$ which represent the edge's capacity and traversal cost, respectively. We also have a variable $h^*$ which denotes the total flow that emerges from the source $s$ and needs to arrive to  the sink $g$. 

The goal in MCF is to assign flow values $h_e\in [0,\infty)$, which represent the number of agents traversing $e$, to all edges $e\in \dE$ in order to minimize the expression $\sum_{e\in \dE}h_e c_e$ under the constraints 
\begin{subequations}
\begin{align}
 h_{e}&\in [0,u_e],  \forall e\in \dE,\label{eq:mcf:bound}\\
 \sum_{v\in \dE^+_s}h_{sv}& = \sum_{v\in \dE^-_g}h_{vg}= h^*, \label{eq:mcf:source}\\
  %\sum_{e\in E_g}h_{e}& =|\A^f|, \label{eq:mcf:sink}\\
  \sum_{v'\in \dE^+_v}h_{vv'}-\sum_{v'\in \dE^-_v}h_{v'v}&= 0, \forall v\in \dV\setminus \{s,g\}.\label{eq:mcf:else}
\end{align}
\end{subequations}
Constraint \eqref{eq:mcf:bound} ensures that edge capacities are maintained; Constraint~\eqref{eq:mcf:source} ensures that the flow leaving $s$ and entering $g$ would be equal to $h^*$; Constraint~\eqref{eq:mcf:else} ensures flow conservation.

For a given instance of $\H(R^f,p^f)$ we construct an equivalent instance of the MCF problem where flows correspond to agent locations and their reward assignments. In particular, we define the graph $\dG=(\dV,\dE)$, such that 
$$\dV=\{s,g\}\cup\{v^i_\tau,w^i_\tau|i\in \V,\tau\in [0\isep T]\}.$$
That is, the vertex set consists of the source $s$ and sink $g$ vertices, as well as two copies $v^i_\tau,w^i_\tau$ of every vertex $i\in \V$ from the workspace graph $\G$ for each time step $\tau\in [0\isep T]$. 

The edge set of $\dG$ is a union of several edge sets, i.e., $\dE=E_{s}\cup E_g\cup E_0\cup E_{R}\cup E_1$, where
\begin{align*}
  E_s&=\{(s,v^i_0)|\exists q\in \A^f, p^q_0[i]\neq 0\},\\
  E_g&=\{(w^i_T,g)|i\in \V\},\\
  E_0&=\{(v^i_\tau,w^i_\tau)|i\in \V, \tau\in [0\isep T]\},\\
  E_{R}&=\{(v^i_\tau,w^i_\tau)|i\in \V, \tau\in [0\isep T],R^f_\tau[i]\neq 0\},\\
  E_{\E}&=\{(w^i_\tau,v^j_{\tau+1})|(i,j)\in \E, \tau\in [0\isep T-1]\}.  
\end{align*}
Namely, edges in $E_s$ connect the source vertex $s$ to all the start vertices of the agents. Edges in $E_g$ connect all the vertices in the final time step $T$ to the sink node $g$. Edges in $E_0$ connect every two copies $v^i_\tau,w^i_\tau$ of a vertex $i\in V$ for a given time step $\tau$. Agents traversing the latter set of edges will receive no reward. On the other hand, every edge $(v^i_\tau,w^i_\tau)\in E_R$ will be used to represent the collection of the reward $R^f_\tau[i]$ by the agent traversing this edge. Edges in $E_{\E}$ simulate the traversal of agents along the edges of $\E$ between one time step to the next. 

Next, we assign costs for the edges in $\dE$. In particular, for any $e\in \dE\setminus E_R$ we set $c_e:=0$. For a given $e'\in(v^i_\tau,w^i_\tau)\in E_R$ we set $c_{e'}:=-R^f_\tau[i]$. The final ingredient to the MCF problem is represented by the edge capacities $u$. For any $e\in E_g\cup E_0\cup E_\E$ we set the infinite capacity $u_e:=\infty$. To ensure that the correct number of agents will arrive at the starting positions, we specify for a given $e'=(s,v_0^i)\in E_s$ the capacity $u_{e'}:=\sum_{q\in \A^f}p_0^q[i]$. Finally, to ensure that every reward will be collected by at most one agent, we specify $u_{e''}:=1$ for any $e''\in E_R$. 

By definition the above MCF formulation is equivalent to the homogeneous problem, with one slight change. Namely, MCF permits fractional assignments to the flow variables $h$, whereas the values of $x,z$ are integral. However, if all edge capacities are integral (which is the case in our setting), the linear relaxation of MCF enjoys a totally-unimodular constraint matrix form~\cite{AhujaETAL93}. Hence, the fractional solution will necessarily have an integer optimal solution. 

The MCF problem above can be solved in $\O((M'+N)\log N(M+N\log N))$ time using Orlin's MCF algorthim~\cite{Orlin93,Williamson2019}, where $M=|\dE|=\O(Tm)$, $N=|\dV|=\O(Tn)$, and $M'$ is the number of $\dG$ edges whose capacity is finite. In our case $M'=\O(Tn)$, which implies that the total time complexity for solving the above MCF formulation is $\O(T^2mn\log n+T^2n^2\log^2n)$. 
\end{proof}
\else
% sketch of proof should go to the conference version
To prove this result we show that the homogeneous problem $\H(R^f, p^f_0)$ can be equivalently represented as MCF. Given a graph $\dG=(\dV,\dE)$, edge capacities $u_e$ and costs $c_e$ for every edge $e\in\dE$, the objective in MCF is to assign flow values $h_e$ to each edge $e$,  to minimize the total flow cost $\sum_{e\in \dE}h_e\cdot c_e$, while satisfying edge capacity constraints $u_e$. 

In the full proof of Lemma~\ref{lem:homogeneous}, which we defer to the extended version of the paper~\cite{Solovey.Bandyopadhyay.ea.21}, we define the MCF ingredients $\dG,u,c$ such that the 
constraints of the resulting MCF problem correspond to the constraints of the homogeneous problem $\H(R^f, p^f_0)$, and the flow variables $h$ describe the agents' locations $x^f$ and reward assignments $z^f$. Moreover, an optimal solution to this MCF problem yields an optimal solution to $\H(R^f, p^f_0)$. The runtime complexity bound in Lemma~\ref{lem:homogeneous} follows from using Orlin's algorithm for MCF~\cite{Orlin93}.
\fi

\section{Algorithm for the heterogeneous case}\label{sec:centralized}
We present an efficient algorithm, which we call \alg, for the heterogeneous task-allocation problem described in Table~\ref{tbl:heterogeneous}. The \alg algorithm decomposes the problem into several homogeneous subproblems that are solved using min-cost flow, as described in Section~\ref{sec:homogeneous}. In the remainder of this section we describe \alg, and determine its approximation and runtime guarantees.

%For now, we consider a centralized version of it; a distributed implementation is provided in Section \ref{sec:distributed_implementation}. 

\subsection{The \alg algorithm}
The \alg algorithm (Algorithm~\ref{alg:main}) accepts as input the agent fleets $\A^1,\ldots, \A^F$, initial positions $p_0$, and reward sets $\dR:=\{R^0,R^1,\ldots,R^F\}$. 
Recall that we wish to find an assignment $x,y,z$ such that the expression $\R(x,y,z)$ is maximized. Also recall that for a given fleet $f\in \F$, timestep $\tau \in [0\isep T]$, and vertices $i,j\in V$, $x_{\tau}^f[i,j]$ represents the transitions of the agents in $\A^f$ from $i$ to $j$, and $y_{\tau}^f[i]$, $z_{\tau}^f[i]$ indicate whether those agents are assigned to collect the rewards $R_{\tau}^0[i]$ and $R_{\tau}^f[i]$, respectively, at vertex $i$. For a given fleet $f\in \F$, denote by $x^f$ the corresponding values of the $x$ assignment for agents belonging to fleet $f$, i.e., $x^f=\{x^f_{\tau}\}_{\tau\in [0\isep T-1]}$. The sets $y^f,z^f$ are similarly defined. 

\alg computes two candidate solutions %$(x,y,z),(x',y',z')$,
using the subroutines \private and \shared, respectively, and returns the one that yields the larger reward of the two. Next we elaborate on those two subroutines. 

%\vspace{-5pt}
\begin{algorithm}[!h]
  $(x,y,z)\gets \private(\dR,\A,p_0)$\;
  $(x',y',z')\gets \shared(\dR,\A,p_0)$\;
  \If {$\R(x,y,z)>\R(x',y',z')$}{
    \Return $(x,y,z)$\;
    }
  \Return $(x',y',z')$\;
  \caption{\alg$(\dR,\A=\{\A^1,\ldots,\A^F\},p_0)$}
  \label{alg:main} 
\end{algorithm} 

%\vspace{-10pt}
The subroutine \private (Algorithm~\ref{alg:private}) prioritizes the assignment of private rewards over shared rewards. This is achieved by assigning to each fleet $f\in \F$ a new reward set $\hat{R}^f$ that combines the private reward set $R^f$ and the shared reward set $R^0$, where the latter is rescaled by $F^{-1}$, i.e., $\hat{R}^f_\tau[j]={R}^f_\tau[j]+ {R}^0_\tau[j]\cdot F^{-1}$. An assignment over $\hat{R}^f$ for every $f\in \F$ is then obtained by solving the homogeneous problem $\H(\hat{R}^f,p_0^f)$. Note that $z^f$ implicitly encodes both an assignment to a private reward and shared reward. That is, an agent assigned to perform a reward $\hat{R}^f_\tau[j]$ can be interpreted as being assigned to both $R^f_\tau[j]$ and $R^0_\tau[j]$.
In lines 5-9 the solutions of the individual fleets are combined to eliminate cases where several agents (from different fleets) are assigned to the same shared reward. To do so, for a given time step $\tau$ and vertex $j$, we iterate over all fleets $f\in \F$ and assign $y^f_\tau[j]=1$ for the first agent we encounter that is assigned to a shared reward in the corresponding vertex. 

The \shared subroutine (Algorithm~\ref{alg:shared}) prioritizes the assignment of shared rewards, by first computing an assignment for all the agents $\A$ to the shared reward set $R^0$, to maximize the total reward. This is achieved by solving the homogeneous problem $\H(R^0,p_0)$.  It then generates an updated private reward set $\bar{R}^f$ for every fleet $f\in \F$, where the value of a reward $\bar{R}^f_\tau [j]$ is equal to $R^f_\tau [j]$ in case that $R^0_\tau[j]$ was not assigned to $f$, and otherwise equal to $R^f_\tau [j]+R^0_\tau[j]$, for every time step $\tau$ and vertex $j$. Next, for every fleet~$f$ the private assignment $(x^f,z^f)$ over $\bar{R}^f$ is computed by solving $\H(\bar{R}^f,p^f_0)$. Note that $z$ here represents simultaneously assignments for shared and private rewards.

\begin{algorithm}[!ht]
  $\hat{R}^f\gets R^f+R^0\cdot F^{-1}, \forall f\in \F$\;
  $(x^f,z^f)\gets \H(\hat{R}^f,p_0^f), \forall f\in \F$\;
  $x\gets \{x^f\}_{f\in \F},z\gets\{z^f\}_{f\in \F}$\;
  $y\gets \{\{0\}_{q\in \A}\}_{\tau \in [0\isep T]}$\;
  \For{$\tau\in [0\isep T], j\in \V$}{
    \For{$f\in \F$}{
        \If{$z_\tau^f[j]==1$}{
            $y_\tau^f[j]\gets 1$\;
            break \;
            }
        }
    }
  \Return $(x,y,z)$\;
  \caption{$\private(\dR,\A,p_0)$}
  \label{alg:private}
\end{algorithm} 
%\vspace{-10pt}

%\vspace{-10pt}
\begin{algorithm}%[!ht]
    $(\bar{x},\bar{y})\gets \H(R^0,p_0)$\; 
    %$\bar{R}^f\gets R^f, \forall f\in \F$\;
    \For{$f\in \F, \tau\in [0\isep T], j\in \V$}{
        $\bar{R}^f_\tau[j]\gets {R}^f_\tau[j]+R^0_\tau[j]\cdot\bar{y}_\tau^f[j]$\;
    }
  $(x^f,z^f)\gets \H(\bar{R}^f,p^f_0), \forall f\in \F$\;
  $x\gets \{x^f\}_{f\in \F},z\gets\{z^f\}_{f\in \F}$\;
  \Return $(x,z,z)$\;
  \caption{$\shared(\dR,\A,p_0)$}
  \label{alg:shared}
\end{algorithm} 
\vspace{-10pt}

\subsection{Analysis of \alg}
We prove that the \alg algorithm is guaranteed to achieve a solution within a constant factor of the optimum. Let $(x,y,z)$ be a solution of \alg. We use $\R(x^f,y^f,z^f)$ to represent the portion of the total reward $\R(x,y,z)$ that is attributed to fleet $f\in \F$ (where assignment values of agents in other fleets are set to $0$). Similarly, denote by $\R(x,y,\bm{0})$ and $\R(x,\bm{0},z)$ the shared and private portion of the total reward, respectively, where~$\bm{0}$ represents the zero vector (whose dimension will be clear from context). 

Let $(X,Y,Z)$ be a solution to the heterogeneous problem (Table~\ref{tbl:heterogeneous}) that maximizes the expression $\R(X,Y,Z)$ and define
$\textsc{opt}:=\R(X,Y,Z)=S^*+P^*$, where $S^*:=\R(X,Y,\bm{0})$ and $P^*:=\R(X,\bm{0},Z)=\sum_{f\in \F}\R({X}^f,\bm{0},{Z}^f)$.
We are ready to state our main theoretical contribution:

\begin{theorem}[Approximation factor of \alg]\label{thm:main}
  Let $(x,y,z)$ be the solution returned by \alg. Then $\R(x,y,z)\geq \textsc{opt}\cdot \frac{F}{2F-1}$. 
\end{theorem}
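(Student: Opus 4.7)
The plan is to separately lower bound the rewards $\R_P := \R(x,y,z)$ and $\R_S := \R(x',y',z')$ produced by the two subroutines \private and \shared, and then combine them via a two-case argument on how $\textsc{opt}$ splits into shared and private components. Decompose $\textsc{opt}=S^*+P^*$ with $P^*=\sum_{f}P^*_f$ where $P^*_f=\R(X^f,\bm{0},Z^f)$, and set $S^*_f:=\sum_{j,\tau}R^0_\tau[j]\,Y^f_\tau[j]$ so that $S^*=\sum_f S^*_f$. The two target inequalities I would aim for are $\R_P\geq P^*+S^*/F$ and $\R_S\geq S^*$.

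For the \private bound, for each fleet $f$ I would use the candidate $(X^f,\max(Y^f,Z^f))$ for the homogeneous subproblem $\H(\hat{R}^f,p_0^f)$ invoked by Algorithm~\ref{alg:private}. Feasibility is immediate: $X^f$ satisfies \eqref{eq:init_position} and \eqref{eq:flow_conservation} as part of \textsc{opt}, and by \eqref{eq:bound_y_1}--\eqref{eq:bound_y_2} and \eqref{eq:bound_z_1}--\eqref{eq:bound_z_2} the flow $X^f$ individually dominates both $Y^f$ and $Z^f$ pointwise, and hence also their maximum. Evaluating under $\hat{R}^f=R^f+R^0/F$ and applying $\max(Y^f,Z^f)\geq Z^f$ on the private term and $\max(Y^f,Z^f)\geq Y^f$ on the shared term gives value at least $P^*_f+S^*_f/F$, so $\H(\hat{R}^f,p_0^f)\geq P^*_f+S^*_f/F$; summing over $f$ yields $\sum_f \H(\hat{R}^f,p_0^f)\geq P^*+S^*/F$. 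Algorithm~\ref{alg:private} constructs $y$ so that $\sum_f y^f_\tau[j]=\max_f z^f_\tau[j]$, and since each $z^f_\tau[j]\in\{0,1\}$ we have $\max_f z^f_\tau[j]\geq \tfrac{1}{F}\sum_f z^f_\tau[j]$; expanding $\R_P$ therefore gives $\R_P\geq \sum_f\H(\hat{R}^f,p_0^f)\geq P^*+S^*/F$.

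For the \shared bound, the call $\H(R^0,p_0)$ in Algorithm~\ref{alg:shared} is solved optimally, and because the shared portion of \textsc{opt} is a feasible candidate, $\sum_{f,j,\tau}R^0_\tau[j]\,\bar{y}^f_\tau[j]\geq S^*$. For each fleet $f$, the pair $(\bar{x}^f,\bar{y}^f)$ obtained from the $\H(R^0,p_0)$ solution by decomposing the flow by fleet of origin is feasible for $\H(\bar{R}^f,p_0^f)$, and its value under $\bar{R}^f=R^f+R^0\cdot\bar{y}^f$ is at least $\sum_{j,\tau}R^0_\tau[j]\,\bar{y}^f_\tau[j]$ (using $(\bar{y}^f_\tau[j])^2=\bar{y}^f_\tau[j]$ and dropping the nonnegative term $\sum_{j,\tau}R^f_\tau[j]\bar{y}^f_\tau[j]$); summing over $f$ yields $\sum_f \H(\bar{R}^f,p_0^f)\geq S^*$. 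Because Algorithm~\ref{alg:shared} returns $y'=z'$, a direct expansion together with $\bar{y}^f_\tau[j]\leq 1$ gives $\R_S=\sum_{f,j,\tau}(R^0_\tau[j]+R^f_\tau[j])\,z^f_\tau[j]\geq \sum_{f,j,\tau}(R^0_\tau[j]\bar{y}^f_\tau[j]+R^f_\tau[j])\,z^f_\tau[j]=\sum_f \H(\bar{R}^f,p_0^f)\geq S^*$.

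Finally, I would combine the two bounds by a case split. If $P^*+S^*/F\geq \tfrac{F}{2F-1}\textsc{opt}$, the \private branch already meets the target ratio. Otherwise, rearranging the reverse inequality (and using $F\geq 1$) gives $F\,P^*<(F-1)\,S^*$, so $\textsc{opt}=S^*+P^*<\tfrac{2F-1}{F}S^*$, which in turn implies $\R_S\geq S^*>\tfrac{F}{2F-1}\textsc{opt}$. Either way, $\max(\R_P,\R_S)\geq \tfrac{F}{2F-1}\textsc{opt}$, which is exactly what \alg returns. The main obstacle in executing this plan is identifying the right feasible candidates for the two homogeneous subproblems: the choice $(X^f,\max(Y^f,Z^f))$ is what allows the \private analysis to simultaneously extract $P^*_f$ and a $1/F$ fraction of $S^*_f$, while $(\bar{x}^f,\bar{y}^f)$ is what ties the second-stage subproblem in \shared back to the initial shared-only optimum $\bar{y}$.
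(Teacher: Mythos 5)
Your proposal is correct and follows essentially the same route as the paper: it establishes the two key bounds $\R_P\geq P^*+S^*/F$ (the paper's Claim~\ref{clm:private}) and $\R_S\geq S^*$ (Claim~\ref{clm:shared}) by exhibiting feasible candidates for the homogeneous subproblems, and then combines them — your case split is equivalent to the paper's explicit minimization over $\Delta=P^*/S^*$ at $\Delta=(F-1)/F$. Your treatment is in fact slightly more careful in two places the paper glosses over: using $\max(Y^f,Z^f)$ rather than $Y^f+Z^f$ (which could violate $z^f_\tau[j]\leq 1$) as the candidate assignment, and explicitly verifying that the deduplication in lines~5--9 of \private loses at most a factor $1/F$ of the shared reward.
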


Before proceeding to the proof % of this theorem, 
we establish two intermediate results, concerning the guarantees of \private and \shared, when considered separately. In particular, we show that each of the subroutines provide complementary approximations with respect to $S^*$ and $P^*$, and \alg enjoys the best of both worlds. 
\ifextendedv
See illustration in Figure~\ref{fig:approximation}
\fi

\ifextendedv
\begin{figure}[!ht]
    \centering
    \includegraphics[width=2.5in]{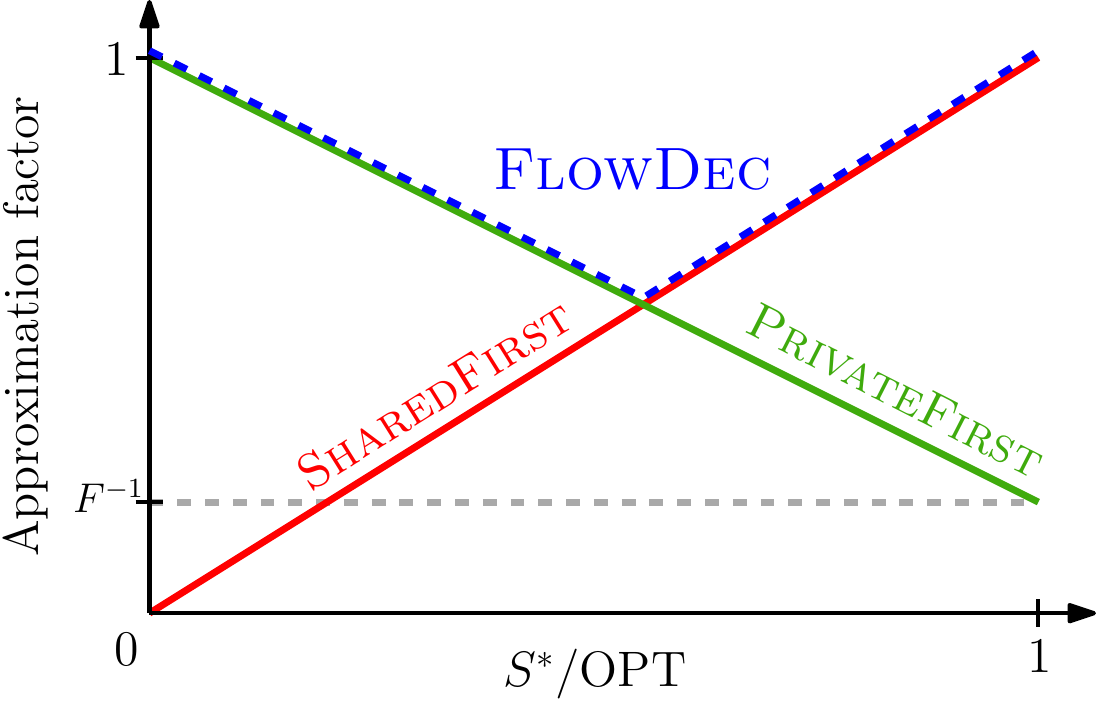}
    \caption{Visualization of the approximation factors (y-axis) achieved by \alg (dashed blue curve), \private (green curve), \shared (red curve), as a function of the ratio between the shared value of the optimal assignment $S^*$ and OPT.}
    \label{fig:approximation}
\end{figure}
\else
\fi

\begin{claim}[\private's solution quality]\label{clm:private}
  If $(x,y,z)$ is the solution returned by \private, then $\R(x,y,z)\geq P^*+ F^{-1}\cdot S^*$.
\end{claim}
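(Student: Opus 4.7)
The plan is to prove the claim in three steps: (i) construct a feasible solution to each homogeneous subproblem $\H(\hat{R}^f,p_0^f)$ derived from the global optimum $(X,Y,Z)$, which lower-bounds the homogeneous optimum that \private actually computes; (ii) sum over fleets to obtain a bound of the form $P^*+F^{-1}S^*$ for the aggregated homogeneous objective; (iii) argue that the post-processing loop (lines 5--9 of Algorithm~\ref{alg:private}), which selects at most one fleet per shared reward, can only increase the objective compared to the aggregated homogeneous value.

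For step (i), for each fleet $f\in\F$ I would define $\tilde{Z}^f_\tau[j]\eqq\max\{Y^f_\tau[j],Z^f_\tau[j]\}$ and show that $(X^f,\tilde{Z}^f)$ is feasible for $\H(\hat{R}^f,p_0^f)$. Feasibility of the flow constraints \eqref{eq:init_position}--\eqref{eq:flow_conservation} is inherited from $(X,Y,Z)$, while the collection constraints \eqref{eq:bound_z_1}--\eqref{eq:bound_z_3} follow because both $Y^f_\tau[j]$ and $Z^f_\tau[j]$ are already bounded by the outgoing/incoming flow from \eqref{eq:bound_y_1}--\eqref{eq:bound_y_2} and \eqref{eq:bound_z_1}--\eqref{eq:bound_z_2}, and both are at most $1$. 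Since $(x^f,z^f)$ is optimal for $\H(\hat{R}^f,p_0^f)$, I obtain $\R^f(x^f,z^f)\geq\R^f(X^f,\tilde{Z}^f)$. Expanding $\hat{R}^f=R^f+F^{-1}R^0$ and using $R^f,R^0\geq 0$ together with $\max\{a,b\}\geq a,b$ yields
\begin{equation*}
\R^f(x^f,z^f)\;\geq\;\sum_{j,\tau}R^f_\tau[j]Z^f_\tau[j] \;+\; F^{-1}\sum_{j,\tau}R^0_\tau[j]Y^f_\tau[j].
\end{equation*}

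For step (ii), summing the last inequality over $f\in\F$ and recognizing the definitions of $P^*=\sum_f\R(X^f,\bm{0},Z^f)$ and $S^*=\sum_{j,\tau,f}R^0_\tau[j]Y^f_\tau[j]$ (the latter using that the shared-assignment constraint \eqref{eq:bound_y_3} forces $\sum_f Y^f_\tau[j]\leq 1$ in the global optimum) gives $\sum_f\R^f(x^f,z^f)\geq P^*+F^{-1}S^*$. This is the core inequality.

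For step (iii), I would unpack the objective returned by \private. The private-reward contribution $\sum_{f,j,\tau}R^f_\tau[j]z^f_\tau[j]$ is already present in $\sum_f\R^f(x^f,z^f)$. For the shared reward, the loop sets $\sum_f y^f_\tau[j]=\indicator[\exists f:z^f_\tau[j]=1]$ by construction, while the shared term inside $\sum_f\R^f$ is $F^{-1}R^0_\tau[j]\sum_f z^f_\tau[j]$. Since $\sum_f z^f_\tau[j]\leq F$, pointwise we get $\sum_f y^f_\tau[j]\geq F^{-1}\sum_f z^f_\tau[j]$, so (using $R^0\geq 0$) $\R(x,y,z)\geq\sum_f\R^f(x^f,z^f)$, and chaining with step (ii) concludes the proof.

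The main subtlety — and what I expect to be the most error-prone step — is the introduction of $\tilde Z^f=\max\{Y^f,Z^f\}$ in step (i): one must simultaneously verify feasibility with respect to $\H(\hat{R}^f,p_0^f)$ and extract a bound that separates cleanly into $P^*$ and $F^{-1}S^*$ after summation. Getting the right ``witness'' assignment is what makes the shared-reward contribution survive with the $F^{-1}$ factor rather than being dominated by private reallocations.
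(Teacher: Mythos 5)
Your proof is correct and follows essentially the same route as the paper: build a feasible witness for each homogeneous subproblem $\H(\hat{R}^f,p_0^f)$ from the optimal heterogeneous solution $(X,Y,Z)$, invoke optimality of the min-cost-flow solution of line~2, and sum over fleets. Your two refinements---using $\max\{Y^f,Z^f\}$ rather than the paper's witness $Y^f+Z^f$ (which could equal $2$ and violate $z^f_\tau[j]\leq 1$), and explicitly checking in step (iii) that the deduplication loop in lines 5--9 only increases the aggregate reward---tighten two points that the paper's one-line argument glosses over.
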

\begin{proof}
  Fix a fleet $f\in \F$ and note that
  \[\R(x^f,y^f,z^f) \geq \R\left({X}^f,\bm{0},{Z}^f\right)+F^{-1}\cdot \R(X^f,Y^f,\bm{0}),\]
  since \private is free to choose $(x^f,z^f):=(X^f,Y^f+Z^f)$ (line~2 in Algorithm~\ref{alg:private}). Hence,
\ifextendedv
\else
{\small
\fi
\begin{align*}
  \R(x,y,z)&=\sum_{f\in \F}\R(x^f,y^f,z^f)\\ & \geq  \sum_{f\in \F}\R\left({X}^f,\bm{0},{Z}^f\right)+F^{-1}\sum_{f\in \F}\R (X^f,Y^f,\bm{0}) \\ 
& =P^* + F^{-1}\cdot S^*.\qedhere
\end{align*}
\ifextendedv
\else
}
\fi
\end{proof}

\begin{claim}[\shared's solution quality]\label{clm:shared}
  If $(x,y,z)$ is the solution returned by \shared then $\R(x,y,z)\!\geq\! S^*$.
\end{claim}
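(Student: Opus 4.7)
The plan is to chain together three inequalities: (i) the shared reward collected by $(\bar{x},\bar{y})$ upper-bounds $S^*$; (ii) each per-fleet call to $\H(\bar{R}^f,p_0^f)$ retains that shared reward; and (iii) evaluating $\R$ on the returned $(x,z,z)$ is at least as large as the objective of the reshaped homogeneous problems.

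First I would show that $\sum_{f,j,\tau} R^0_\tau[j]\bar{y}^f_\tau[j]\geq S^*$. The idea is that the shared-reward portion of the optimal heterogeneous solution, namely the aggregated flow $\sum_f X^f$ together with the assignment $Y$, is itself a feasible solution to the homogeneous problem $\H(R^0,p_0)$: the initial-position constraint~\eqref{eq:init_position} aggregates across fleets, flow conservation~\eqref{eq:flow_conservation} is preserved by summation, and the shared-assignment constraints~\eqref{eq:bound_y_1}--\eqref{eq:bound_y_3} become the $z$-type constraints of the single super-fleet. Since $(\bar{x},\bar{y})$ is optimal for $\H(R^0,p_0)$ by Lemma~\ref{lem:homogeneous}, its value dominates that of the feasible candidate $(\sum_f X^f,Y)$, which equals $S^*$.

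Next I would leverage the optimality of $(x^f,z^f)$ for $\H(\bar{R}^f,p_0^f)$ by using $(\bar{x}^f,\bar{y}^f)$ as a feasible competitor. Because $\bar{y}^f_\tau[j]=1$ only where fleet $f$ has an agent under $\bar{x}$, the pair $(\bar{x}^f,\bar{y}^f)$ satisfies the reward-support constraints~\eqref{eq:bound_z_1}--\eqref{eq:bound_z_2}, and it trivially meets~\eqref{eq:bound_z_3} since $\bar{y}^f$ is binary. Optimality then yields
\[
\sum_{j,\tau}\bar{R}^f_\tau[j]\,z^f_\tau[j]\;\geq\;\sum_{j,\tau}\bar{R}^f_\tau[j]\,\bar{y}^f_\tau[j]\;\geq\;\sum_{j,\tau}R^0_\tau[j]\,\bar{y}^f_\tau[j],
\]
using $\bar{R}^f_\tau[j]=R^f_\tau[j]+R^0_\tau[j]\bar{y}^f_\tau[j]\geq R^0_\tau[j]\bar{y}^f_\tau[j]$ and $(\bar{y}^f_\tau[j])^2=\bar{y}^f_\tau[j]$. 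Summing over $f$ and applying step (i) gives $\sum_{f,j,\tau}\bar{R}^f_\tau[j]\,z^f_\tau[j]\geq S^*$.

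Finally, since $\bar{y}^f_\tau[j]\leq 1$, we have the pointwise bound $\bar{R}^f_\tau[j]\leq R^f_\tau[j]+R^0_\tau[j]$, so
\[
\R(x,z,z)\;=\;\sum_{f,j,\tau}\bigl(R^0_\tau[j]+R^f_\tau[j]\bigr)z^f_\tau[j]\;\geq\;\sum_{f,j,\tau}\bar{R}^f_\tau[j]\,z^f_\tau[j]\;\geq\;S^*,
\]
which is the desired inequality for the returned triple $(x,y,z)=(x,z,z)$. The main subtlety, and the step I would be most careful about, is the per-fleet disaggregation of $\bar{y}$ used in step (i): one must argue that the homogeneous optimum can indeed be taken with a fleet-tagged assignment $\bar{y}^f$ satisfying $\sum_f\bar{y}^f_\tau[j]\leq 1$ and supported by $\bar{x}^f$, which follows because the MCF solution is integer and can be path-decomposed into per-agent (hence per-fleet) flows before distributing the at-most-one unit of $\bar{y}$ at each $(j,\tau)$ to any one of the visiting fleets.
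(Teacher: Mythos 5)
Your argument is correct and follows the same route as the paper: the paper's entire proof is the single inequality $\R(\bar{x},\bar{y},\bm{0})\geq \R(X,Y,\bm{0})=S^*$, which is exactly your step (i) (feasibility of the aggregated optimal shared assignment for $\H(R^0,p_0)$), and your steps (ii)--(iii), including the per-fleet disaggregation of $(\bar{x},\bar{y})$ via integral path decomposition, merely spell out what the paper leaves implicit about the subsequent per-fleet re-solve retaining that reward. No gap; your write-up is just a more detailed version of the paper's one-line proof.
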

\begin{proof}
  The proof follows from the inequality $\R(\bar{x},\bar{y},\bm{0})\geq \R(X,Y,\bm{0})$, where $\bar{x},\bar{y}$ are defined Algorithm~\ref{alg:shared}, line~1.
\end{proof}

We are ready for the main proof:
\begin{proof}[Proof of Theorem~\ref{thm:main}]
Claims~\ref{clm:private} and~\ref{clm:shared} imply that \alg obtains a solution $(x,y,z)$ with reward at least $\max\left\{P^*+F^{-1}\cdot S^*,S^*\right\}$. If either of $S^*$ or $P^*$ is zero, then \alg would return the optimal solution. Thus, assume instead that $S^*$ and $P^*$ are positive. Thus, there exists $\Delta>0$ such that $P^*=\Delta S^*$. 
The approximation factor of \alg can be expressed as follows:
\begin{align*}
    \frac{\R(x,y,z)}{\R(X,Y,Z)} & \geq \frac{\max\left\{\Delta\cdot S^*+F^{-1}\cdot S^*,S^*\right\}}{\Delta S^*+S^*} \\ & 
    = \max\left\{\frac{\Delta+F^{-1}}{\Delta +1},\frac{1}{\Delta+1}\right\}\\ 
    & \geq 
    \max\left\{\frac{\frac{F-1}{F}+F^{-1}}{\frac{F-1}{F} +1},\frac{1}{\frac{F-1}{F}+1}\right\} = \frac{F}{2F-1},
\end{align*}
where the last inequality follows from $\argmin_{\Delta>0}\max\left\{\frac{\Delta+F^{-1}}{\Delta +1},\frac{1}{\Delta+1}\right\}=(F-1)/F$, which is used since we wish to find a lower-bound that is applicable to any combination of $P^*$ and $S^*$.
\end{proof}

We conclude with a runtime analysis of \alg.

\begin{corollary}[\alg runtime]\label{cor:runtime}
  \alg can be implemented in $\O(FT^2mn\log n+FT^2n^2\log^2n)$ time.
\end{corollary}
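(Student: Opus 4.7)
The plan is to bound the runtime of \alg by separately bounding the runtime of the two subroutines \private and \shared, observing that their cost is dominated by the calls to the homogeneous solver $\H$, and then invoking Lemma~\ref{lem:homogeneous}. The overall approach is essentially an accounting exercise, so the main task is to keep track of how many homogeneous subproblems are solved and to verify that the bookkeeping performed between those calls is truly lower-order.

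First I would analyze \private (Algorithm~\ref{alg:private}). Constructing the reshaped reward sets $\hat{R}^f$ in line~1 is a pointwise combination of $R^f$ and $R^0$ over the $F$ fleets, $T+1$ time steps, and $n$ vertices, and thus takes $\O(FTn)$ time. Line~2 invokes the homogeneous solver $\H(\hat{R}^f,p_0^f)$ once for each fleet $f\in\F$; each such call takes $\O(T^2mn\log n+T^2n^2\log^2 n)$ time by Lemma~\ref{lem:homogeneous}, so the $F$ calls contribute $\O(FT^2mn\log n+FT^2n^2\log^2 n)$ in total. The post-processing loop in lines~5--9 iterates over every triple $(\tau,j,f)$ and performs $\O(1)$ work per iteration, which is $\O(FTn)$ and therefore absorbed.

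Next I would do the analogous analysis for \shared (Algorithm~\ref{alg:shared}). Line~1 solves a single instance of $\H(R^0,p_0)$ at cost $\O(T^2mn\log n+T^2n^2\log^2 n)$. Lines~2--3 form the aggregated reward sets $\bar{R}^f$ by a pointwise update over $F$ fleets, $T+1$ time steps, and $n$ vertices, incurring $\O(FTn)$ work. Line~4 then issues $F$ further calls to $\H$, each bounded as in the \private analysis, giving $\O(FT^2mn\log n+FT^2n^2\log^2 n)$ in total. Summing across the two subroutines, \alg performs $2F+1=\O(F)$ homogeneous solves plus lower-order bookkeeping, which yields $\O(FT^2mn\log n+FT^2n^2\log^2 n)$.

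Finally, I would handle the top-level comparison in Algorithm~\ref{alg:main}: evaluating $\R(x,y,z)$ and $\R(x',y',z')$ via Eq.~\eqref{eq:objective_het} requires a single sweep over $(f,\tau,j)$, i.e.\ $\O(FTn)$ time, which is again dominated. The only subtle point — the one place the accounting could go wrong — is making sure that no step inside the loops of \private or \shared exceeds $\O(1)$ work per iteration; this is straightforward once one notices that shared-reward de-duplication only breaks ties across fleets and does not require recomputing flows. Combining all terms gives the stated bound.
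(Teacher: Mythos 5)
Your proposal is correct and follows essentially the same route as the paper: the paper's proof likewise observes that \private and \shared together issue $2F+1$ calls to the homogeneous solver, each costing $\O(T^2mn\log n+T^2n^2\log^2 n)$ by Lemma~\ref{lem:homogeneous}, and that these calls dominate the runtime. Your version merely spells out the $\O(FTn)$ bookkeeping that the paper leaves implicit.
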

\begin{proof}
  The bottleneck of \alg is solving multiple homogeneous subproblems. Since \private and \shared solve $2F+1$ homogeneous problems, respectively, and each such computation requires $\O(T^2mn\log n+T^2n^2\log^2n)$ time (Lemma~\ref{lem:homogeneous}), the total runtime follows. 
\end{proof}

\section{Experimental results}\label{sec:experiments}
We validate our theoretical results from the previous section through simulation experiments. We show that our \alg approach is faster than a MILP approach by several orders of magnitude, and we observe that the approximation factors that \alg achieves in practice are higher than the worst-case lower bound of $\frac{F}{2F-1}$. Additionally, %in accordance with the complexity analysis, 
we observe experimentally that the running time of \alg is insensitive to the number of agents in each fleet.

\subsection{Implementation and scenario details}
The results were obtained using a laptop with 2.80GHz 4-core i7-7600U CPU, and 16GB of RAM. We implemented the \alg algorithm in \cpp, using network simplex for MCF in the LEMON Graph Library ~\cite{Lemon,Kovacs15}. Our code is available at {\small \url{github.com/StanfordASL/heterogeneous-task-allocation
}}. For comparison, we used the the MILP implementation in CPLEX~\cite{Cplex} with the default stopping criterion using a relative-gap and absolute tolerances of 1e-4 and 1e-6, respectively.

We tested both implementations on a predictive problem formulation, as described in Section~\ref{sec:predictive}. We consider the problem of heterogeneous tracking of multiple moving objects, where the reward values are chosen to incentivise agents to visit graph vertices where objects are located.  In particular, for a given graph $\G$, time horizon $T$, number of fleets $F$, and initial object count $I\in \dN_+$, we chose uniformly at random for each reward type $t\in [0\isep F]$, $I$ vertices of $\G$ (with repetitions) which represent initial locations of objects to be tracked as part of the reward set $R^t$. In particular, we set the value of $R^t_0[i]$ to be the number of objects at a given vertex $i\in \V$. For the subsequent time steps $\tau\in [1\isep T]$ we set $R^t_\tau$ to be the expected reward assuming that the objects move to neighboring vertices via a random walk. Initial agent locations are chosen in a uniform random fashion. 

\subsection{Results}
We compare the solution quality and runtime of the MILP approach and our \alg algorithm. We then study the scalability of the \alg algorithm on larger test cases for which the MILP approach has timed out. 

\newcolumntype{L}{>{\centering\arraybackslash}m{0.3cm}}
\newcommand{\dnf}{-}

\begin{table}[!t]
\vspace{5pt}
\centering
\begin{tabular}{Lc||c|c|c|c|c|c|c|}
\cline{3-9}
                                           &      & \multicolumn{7}{c|}{fleets}    \\ \hline
\multicolumn{1}{|L||}{time hor.}         & type & 2 & 4 & 8 & 16 & 32 & 64 & 128 \\ \hline \hline
\multicolumn{1}{|c||}{\multirow{3}{*}{2}}   & MILP & 0.00 & 0.01 & 0.01 & 0.03 & 0.07 & 0.16 & 0.36 \\ \cline{2-9} 
\multicolumn{1}{|c||}{}                     & \textsc{Flow} & 0.01 & 0.01 & 0.02 & 0.03 & 0.06 & 0.12 & 0.26   \\ \cline{2-9} 
\multicolumn{1}{|c||}{}                     & APX  & 1.00 & 1.00 & 0.93 & 0.86 & 0.64 & 0.75 & 0.75   \\ \hline  \hline
\multicolumn{1}{|c||}{\multirow{3}{*}{4}}   & MILP &  0.02 & 0.04 & 0.07 & 0.16 & 0.39 & 0.90 & 1.77  \\ \cline{2-9} 
\multicolumn{1}{|c||}{}                     & \textsc{Flow} & 0.01 & 0.03 & 0.05 & 0.09 & 0.17 & 0.35 & 0.67   \\ \cline{2-9} 
\multicolumn{1}{|c||}{}                     & APX  &  1.00 & 0.97 & 0.87 & 0.86 & 0.77 & 0.74 & 0.85    \\ \hline  \hline
\multicolumn{1}{|c||}{\multirow{3}{*}{8}}   & MILP &   0.11 & 0.26 & 0.58 & 1.39 & 3.62 & 12 & 21  \\ \cline{2-9} 
\multicolumn{1}{|c||}{}                     & \textsc{Flow} &  0.04 & 0.07 & 0.13 & 0.25 & 0.46 & 0.94 & 1.85   \\ \cline{2-9} 
\multicolumn{1}{|c||}{}                     & APX  &  0.96 & 0.92 & 0.82 & 0.80 & 0.84 & 0.88 & 0.94     \\ \hline  \hline
\multicolumn{1}{|c||}{\multirow{3}{*}{16}}  & MILP &  0.45 & 1.40 & 3.78 & 20 & 93 & 453 & \dnf   \\ \cline{2-9} 
\multicolumn{1}{|c||}{}                     & \textsc{Flow} &  0.10 & 0.19 & 0.36 & 0.67 & 1.27 & 2.54 & 5.92   \\ \cline{2-9} 
\multicolumn{1}{|c||}{}                     & APX  & 0.96 & 0.93 & 0.88 & 0.92 & 0.89 & 0.95 &     \\ \hline  \hline
\multicolumn{1}{|c||}{\multirow{3}{*}{32}}  & MILP &  1.5 & 5.6 & 20 & 284 & 567 & \dnf & \dnf \\ \cline{2-9} 
\multicolumn{1}{|c||}{}                     & \textsc{Flow} &  0.29 & 0.52 & 1.00 & 1.87 & 3.76 & 7.11 & 16  \\ \cline{2-9} 
\multicolumn{1}{|c||}{}                     & APX  &  0.94 & 0.89 & 0.82 & 0.88 & 0.94 &  &   \\ \hline \hline
\multicolumn{1}{|c||}{\multirow{3}{*}{64}}  & MILP &  4.78 & 21 & 203 & \dnf & \dnf & \dnf & \dnf  \\ \cline{2-9} 
\multicolumn{1}{|c||}{}                     & \textsc{Flow} &  0.84 & 1.50 & 2.78 & 5.22 & 12 & 20 & 43 \\ \cline{2-9} 
\multicolumn{1}{|c||}{}                     & APX  &  0.97 & 0.91 & 0.92 &  &  &  &      \\ \hline  \hline
\multicolumn{1}{|c||}{\multirow{3}{*}{128}} & MILP & 14.20 & \dnf & \dnf & \dnf & \dnf & \dnf & \dnf    \\ \cline{2-9} 
\multicolumn{1}{|c||}{}                     & \textsc{Flow} &  2.48 & 4.85 & 8.25 & 17 & 31 & 59 & 115  \\ \cline{2-9} 
\multicolumn{1}{|c||}{}                     & APX  &  0.96 &  &  &  &  &  &   \\ \hline  
\end{tabular}
%\vspace{5pt}
\caption{Comparison between \alg and a MILP approach in terms of runtime and solution quality for $10\times 10$ grid graphs. For every combination of number of fleets $F$ and time horizon $T$ we report in the ``MILP'' and ``\textsc{Flow}'' rows the corresponding running times (in seconds). The label ``\dnf'' indicates that MILP did not finish within the 10-minute time limit. In the ``APX'' row we report the approximation factor of \alg, i.e., the quotient between the reward values obtained by \alg and MILP, respectively.\label{tbl:milp}} 
\end{table}

\subsubsection{Comparison between \alg and \textsc{MILP}}
In this setup, we fix the graph $\G$ to be a $10\times 10$ grid, set the initial number of tracked objects for every reward to be $I=3$, and set the number of agents within each fleet $f\in \F$ to be $a_f=5$. In Table~\ref{tbl:milp} we report the running time of the MILP solution and the \alg algorithm, as well as the approximation factor that was achieved by \alg, for scenarios of varying sizes. We set the time horizon $T$ and the fleet number $F$ to be in the range $[2\isep 128]$. The reported running times are averaged over $20$ randomly-generated scenarios for each parameter combination.  The reported approximation factor is the minimum (i.e., worst) result over the 20 runs. We terminate the run of each algorithm if it exceeds $10$ minutes, in which case we do not its running  time or the approximation factor.

In terms of running time, we observe that the  MILP approach behaves similarly to \alg only for the smallest test cases, e.g., when $T\in \{2,4\}$. However, as the problem size increases the running time of the MILP approach grows significantly faster than that of \alg. For instance, already for $T=8$, when $F=2$ \alg is nearly $3$ times faster than the MILP approach, and when $F=128$ it is more than $10$ times faster. As $T$ is increased we encounter more scenarios in which the MILP implementation is forced to time out, whereas \alg finishes fairly quickly. For example, when $T=16$, \alg finishes within a few seconds, for all fleet sizes, whereas the MILP approach times out for $F=128$, which yields a speedup of at least $100$ for \alg. For $T=128$,  the MILP approach is able to solve only the smallest scenario, whereas \alg solves all of them.
In terms of solution quality, we observe that \alg typically achieves approximation factors that are larger than the theoretical %$F/(2F-1)$ 
lower bound, which suggests that this bound is loose. We note that the smallest approximation factor that we observed with \alg is $0.64$ (for $F=32$).

\subsubsection{Scalability of \alg} 
We rerun the previous experiment for a larger $50\times 50$ graph to test how the runtime of the \alg algorithm is affected by its different parameters. We report in Table~\ref{tbl:pfsf} the runtime of the algorithm, where $I=3, a_f=5$ were set as in the previous experiment. In accordance with the theoretical complexity bound in Corollary~\ref{cor:runtime}, we observe that the runtime increases linearly with the number of fleets $F$. The runtime increases linearly with the time horizon $T$ as well, which suggests that the theoretical quadratic increase is overly conservative. 

Finally, we note that the runtime of \alg is only mildly affected by the size of each individual fleet, e.g.,  as we increase the value of $a_f$ from $5$ to $500$, we observe a modest increase of $\%10$ to the runtime. This is in contrast to the MILP approach which is highly sensitive to this value. %\todo{Consider rerunning the experiments in the table with $a_f=500$ instead.}

\begin{table}[]
\vspace{10pt}
\begin{center}
\begin{tabular}{c|c|c|c|c|c|c|c|}
\cline{2-8}
                                   & \multicolumn{7}{c|}{fleets}    \\ \hline
\multicolumn{1}{|L||}{time hor.} & 2 & 4 & 8 & 16 & 32 & 64 & 128 \\ \hline \hline
\multicolumn{1}{|c||}{2}            & 0.1 & 0.2 & 0.4 & 0.8 & 1.4 & 4.8 & 7.3  \\ \hline
\multicolumn{1}{|c||}{4}            & 0.3 & 0.6 & 1.1 & 2.2 & 4.2 & 13 & 19  \\ \hline
\multicolumn{1}{|c||}{8}            &  1.0 & 1.6 & 3.2 & 6.6 & 12 & 38 & 55  \\ \hline
\multicolumn{1}{|c||}{16}           &  3 & 5 & 9 & 19 & 35 & 118 & 214   \\ \hline
\multicolumn{1}{|c||}{32}           &   9 & 14 & 29 & 59 & 132 & 214 & 414  \\ \hline
\multicolumn{1}{|c||}{64}           &   31 & 48 & 100 & 189 & 375 & 700 & 1423  \\ \hline
\multicolumn{1}{|c||}{128}          &  74 & 154 & 299 & 518 & 994 & 2040 & 4055   \\ \hline
\end{tabular}
\vspace{-5pt}
\end{center}
\caption{Running time (seconds) of \alg for a $50\times 50$ grid as a function of the number of fleets $F$ and time horizon $T$. \label{tbl:pfsf}}
\end{table}

\section{Conclusion}\label{sec:future}
We presented a near-optimal algorithm, termed \alg, for heterogeneous task allocation and demonstrated its good performance through extensive simulation tests. Our work suggests a few interesting directions for future research, which we highlight below. %We envisage this algorithm will see wide-spread use in many multi-agent and swarm applications. 
%Future work will focus on incorporating multiple types of tasks, especially event-triggered tasks, and different kinds of heterogeneity. 
From an algorithmic perspective we plan to explore whether the approximation factor of \alg can be improved by introducing a third subroutine which would better estimate the optimal reward for scenarios in which the subroutines \shared and \private under 
\ifextendedv
approximate (see Figure~\ref{fig:approximation}).
\else
approximate.
\fi
It would also be interesting to test whether a specialized MCF solver for object tracking~\cite{WangEA19} can speed up the solution of homogeneous problems.   
We also plan to consider extending our approach to account for collision-avoidance constraints by exploiting the fact that those constraints can be captured via a MCF-based solution for the homogeneous problem. Finally, we  plan to extend our algorithm to a distributed setting, by potentially relying on a dual decomposition that would be applied to the homogeneous subproblems ~\cite{Ref:Bertsekas2016Nonlinear}.

{\small
\section*{Acknowledgments}
Part of this research was carried out at the Jet Propulsion Laboratory, California Institute of Technology, under a contract with the National Aeronautics and Space Administration (80NM0018D0004). This work was supported in part by the Toyota Research Institute (TRI) and the Center for Automotive Research at Stanford (CARS). 
\ifextendedv
The authors thank Xiaoshan Bai, Robin Brown, Shushman Choudhury, Devansh Jalota, Erez Karpas, Javier Alonso-Mora, and Matthew Tsao for fruitful discussions.
\fi
}
{%\small
\bibliographystyle{unsrt}
\bibliography{main}
}

\end{document}

%%% Local Variables:
%%% mode: latex
%%% TeX-master: t
%%% End: